\newtheorem{theorem}{Theorem}
\newtheorem{lemma}{Lemma}
\newtheorem{definition}{Definition}
\newcommand{\myref}[1]{Eq.~(\ref{#1})}
\definecolor{myColor}{rgb}{0,0,0} 
\newcommand*{\new}{\@ifnextchar\bgroup{\new@}{\color{myColor}}}
\newcommand*{\new@}[1]{{\textcolor{myColor}{#1}}}
\title{A First Step Towards Runtime Analysis of\\ Evolutionary Neural Architecture Search}
\author{
Zeqiong Lv$^1$
\and
Chao Qian$^2$\and
Yanan Sun$^{1*}$
\affiliations
$^1$College of Computer Science, Sichuan University, Chengdu 610065, China\\
$^2$National Key Laboratory for Novel Software Technology, and School of Artificial Intelligence, Nanjing University, Nanjing 210023, China\\
\emails
zq\_lv@stu.scu.edu.cn,
qianc@nju.edu.cn,
ysun@scu.edu.cn
}
\begin{document}

\maketitle

\begin{abstract}

    Evolutionary neural architecture search (ENAS) employs evolutionary algorithms to find high-performing neural architectures automatically, and has achieved great success. However, compared to the empirical success, its rigorous theoretical analysis has yet to be touched. This work goes preliminary steps toward the mathematical runtime analysis of ENAS. In particular, we define a binary classification problem $\textsc{UNIFORM}$, and formulate an explicit fitness function to represent the relationship between neural architecture and classification accuracy. Furthermore, we consider (1+1)-ENAS algorithm with mutation to optimize the neural architecture, and obtain the following runtime bounds: \new{both the local and global mutations find the optimum in an expected runtime of $\Theta(n)$, where $n$ is the problem size}.
    The theoretical results show that \new{the local} and \new{global} mutations achieve nearly the same performance on $\textsc{UNIFORM}$. 
    Empirical results also verify the equivalence of these two mutation operators. 

\end{abstract}

\section{Introduction}

Manually designing deep neural networks (DNNs) requires expertise in both DNNs and domain knowledge of the problem to be solved. To address this, researchers propose neural architecture search (NAS)~\cite{elsken2019neural} that can automatically design promising neural architectures. In the early stages, NAS often employed the RL algorithms~\cite{zoph2016neural}. In recent years, the gradient algorithms~\cite{liu2018darts} have been proposed to improve search efficiency. Evolutionary computation (EC) is another optimization algorithm widely used to solve NAS. EC-based NAS (also known as evolutionary NAS, i.e., ENAS) regards the neural architectures as individuals and then searches for promising neural architecture with evolutionary operators, such as mutation. In fact, EC has already been used to optimize both weights and architectures of (shallow) neural networks since the late 1980s~\cite{montana1989training,yao1999evolving,stanley2002evolving}, which is also known as ``evolving neural networks" or ``neuroevolution." But in recent years, along with the DNNs' breakthrough, ENAS is increasingly popular in the field of automated machine learning and has yielded impressive results of (deep) neural architectures in DNNs~\cite{real2019regularized,sun2019completely,liu2021survey}. 

Along with extensive empirical knowledge on EC solving the complex optimization problem, there is also increasing solid theory that guides the design and application of EC. In particular, runtime analysis, which represents the expected number of fitness evaluations until an optimal or approximate solution is searched for the first time, has become an established branch in the theory of EC that enables such results~\cite{auger2011theory,neumann2013bioinspired,zhou2019evolutionary,doerr2020theory}. This is reflected in that runtime bounds (lower and upper bounds) can help us understand the influence of certain parameter choices like the mutation rate, the population size, or the criterion selection. The tighter the runtime bounds are, the more can be said about these choices. 
  
Analyzing the runtime bounds of ENAS in a rigorous and precise manner, however, is a challenging task and has yet to be touched. One of the main difficulties is the lack of an explicit expression of the fitness function, which in turn hinders the utilization of mathematical runtime analysis tools. Typically, fitness calculations in ENAS are obtained by training neural architectures, which is $NP$-hard when the neural architecture has more than two neurons~\cite{blum1988training}. Such events make it impractical to comprehend the quality of the neural architecture and partition solution space from a theoretical perspective. Therefore, it is far beyond our current capability to analyze the runtime of ENAS on real-world problems related to NAS. It is worth noting that encountering such difficulty is common in theoretical analysis. Theoreticians often resort to analyzing search strategies on artificial problems with general structures, such as \textsc{OneMax} and \textsc{LeadingOnes}~\cite{droste2002analysis}, hoping that this gives new insight into the behavior of such search algorithms on real-world problems. This inspires us to provide an explicit fitness function on an artificial problem that maps the NAS problem. Note that Fischer \emph{et al.}~\shortcite{fischer2023first} have conducted the first runtime analysis of neuroevolution, which, however, focuses on optimizing weights of neural networks given an architecture instead of searching neural architectures.

When tackling new problems, EC runtime studies typically begin with general-purpose algorithms, such as the (1+1)-evolutionary algorithm (EA), that restricts the size of the population to just one individual and does not use crossover. Researchers concentrate on one-bit and multi-bit mutations, and have conducted rigorous runtime analysis on the influence of mutation operators~\cite{doerr2006speeding,doerr2008comparing,kotzing2012max}. Usually, one-bit mutation is considered as local search, whereas multi-bit mutation (e.g., bit-wise mutation) relates to global search since it can increase the step size to prevent a local trap. One might be tempted to believe that the global operator generally is superior to the local one. However, \cite{doerr2008comparing} showed that bit-wise mutation could be worse than one-bit mutation \new{hillclimbing (e.g., \textsc{OneMax}, \textsc{LeadingOnes})}. Such work can provide insights into the choice of mutation operators. This is one important motivation for comparing the performance of these two mutations in a rigorous way. 

As indicated above, the focus of this paper is to take the first step towards the runtime analysis of ENAS algorithms. The contributions are summarized as follows.
\begin{itemize}
	\item We define a binary classification problem \textsc{Uniform}, and formulate an explicit fitness function to measure the precise quality (classification accuracy) of any given neural architecture.
	\item We partition the solution space based on the fitness, which is a key step in carrying out the runtime analysis by using mathematical tools.
	\item We analyze the runtime bounds of the (1+1)-ENAS algorithms with \new{local} and \new{global} mutations, and prove the equivalence of the two mutations on \textsc{Uniform}. Empirical results also verify the theoretical results. 
\end{itemize}

\section{Preliminaries} \label{sec:Preliminaries}



This section presents the foundations of neural architectures for classification problems that are relevant to our study and introduces a (1+1)-ENAS algorithm for searching neural architecture. \new{Note that for intervals of integers we write $[k..l]:=\{x\in\mathbb{Z}\mid k\leq x\leq l\}$.}

\subsection{Neural Architecture}	\label{sec:DNN}
In the work of~\cite{fischer2023first}, they introduced a feed-forward neural architecture that consists of an input layer, hidden layers, and an output layer, to solve binary classification problems. The hidden layers are constructed by a set of basic units (each unit is a single neuron with binary threshold activation function) that output binary strings, and an OR neuron that computes a Boolean OR of the binary outputs from the previous layer. Such a unit with a single neuron can act as a hyperplane that points to a half-space (a linearly divisible region)\footnote{A hyperplane is defined as the set $\{\mathbf{x}\in \mathbb{R}^D \vert \mathbf{ax}= z\}$, where $\mathbf{a}$ is a nonzero vector in $\mathbb{R}^D$ and $z$ is a scalar, and a half-space is defined as the set $\{\mathbf{x}\in \mathbb{R}^D \vert \mathbf{ax}\geq z\}$~\cite{bertsimas1997introduction}. Note that a hyperplane is the boundary of a corresponding half-space.}. Thus, only the linear classification problem is touched upon.

However, most classification problems are non-linearly divisible and can be described as the disjoint union of polyhedra of $\mathbb{R}^D$, where a polyhedron is the intersection of a finite number of half-spaces, categorized into bounded \new{(finite region formed by hyperplanes)} and unbounded polyhedra~\cite{bertsimas1997introduction}. In Section~\ref{sec:problem_definition}, we will define a binary classification problem with classification regions: half-space, bounded polyhedron, and unbounded polyhedron.

\begin{figure}
	\centering
	\subfigure[A-type block]{
		\includegraphics[height=0.65in]{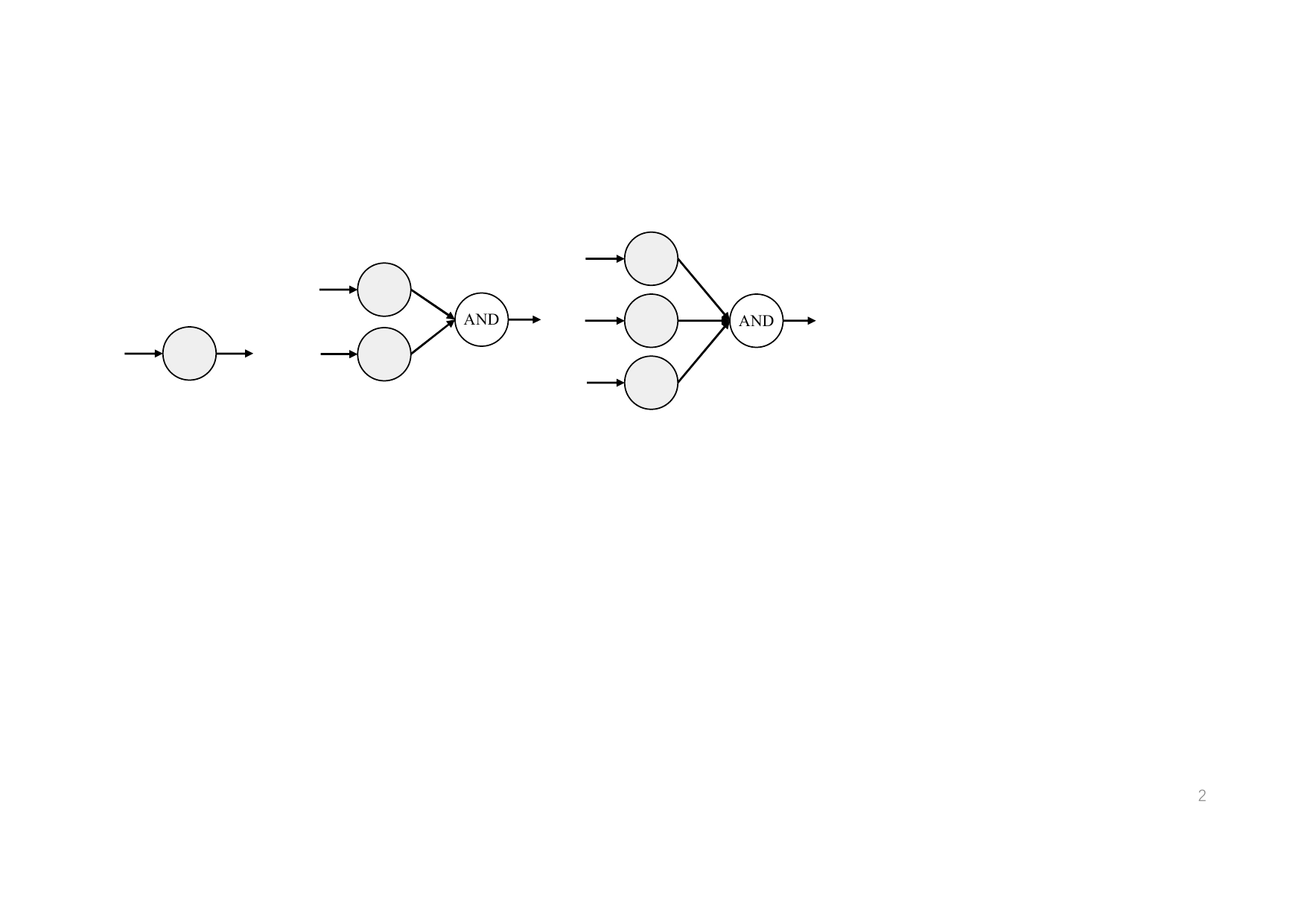}
		\label{fig:block_a}
	} 
	\subfigure[B-type block]{
		\includegraphics[height=0.65in]{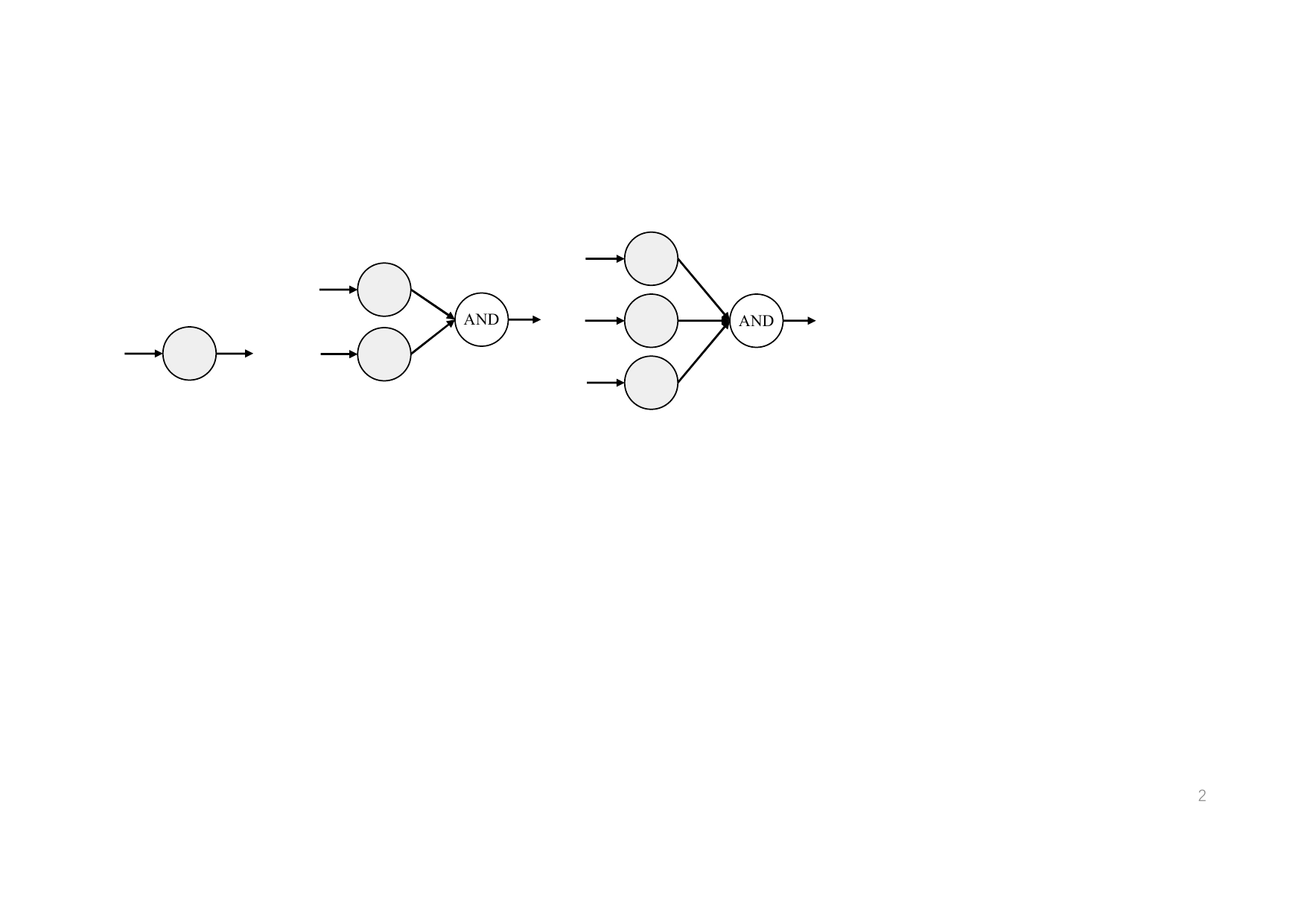}
		\label{fig:block_b}
	} 
	\subfigure[C-type block]{
		\includegraphics[height=0.65in]{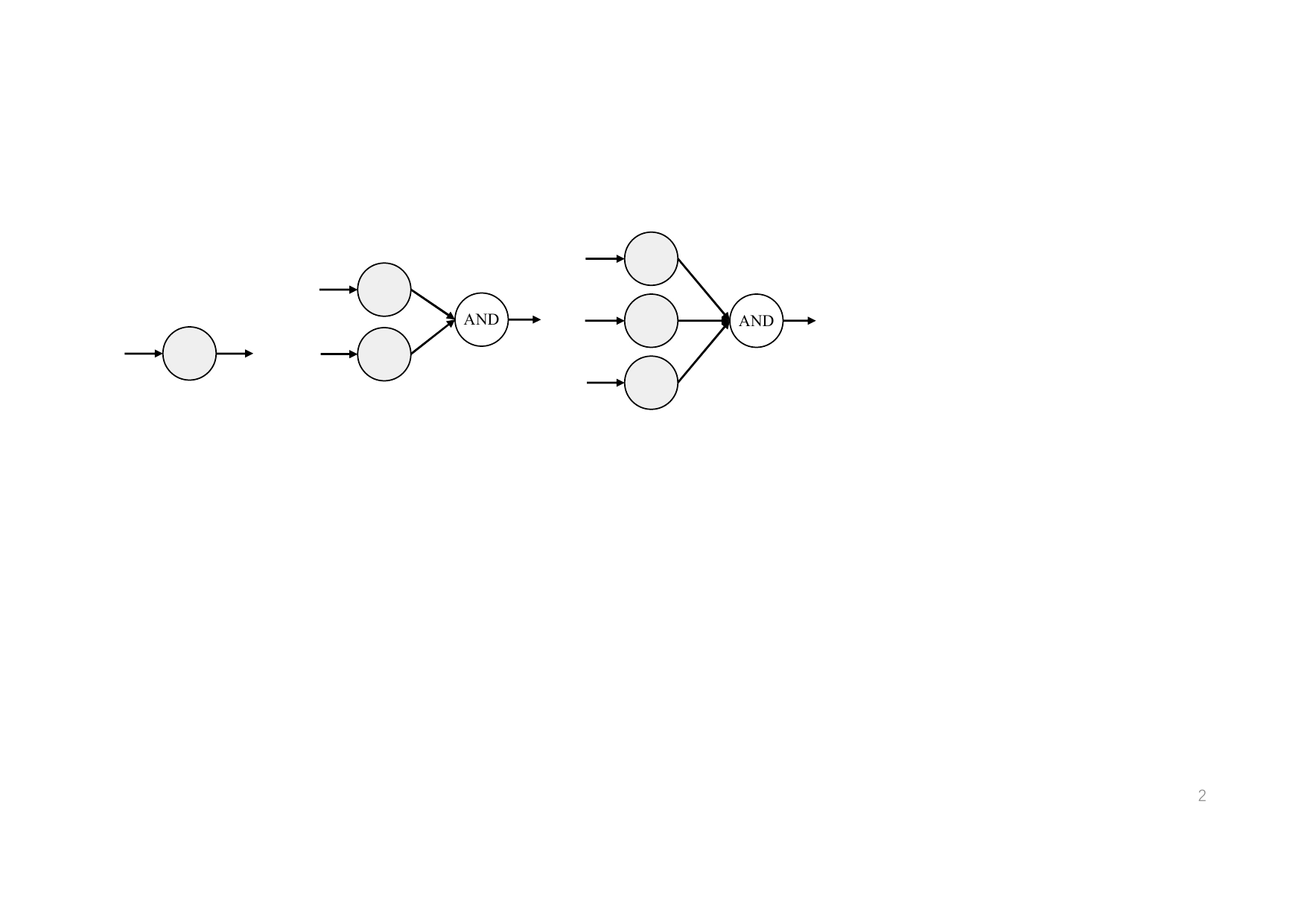}
		\label{fig:block_c}
	} 
	\caption{ Three types of blocks that can be used to build DNNs with different topological architectures. All of the neurons output Boolean value. The latter two blocks are connected with ADD neurons and can function as non-linear classifiers. }
	\label{fig:blocksType}
\end{figure}

By adding a layer to compute the Boolean AND of the outputs of finite neurons, the unit with these finite neurons and the added layer can point to any polyhedra (with nonlinear regions included). Note that polyhedra intersected by one to three half-spaces can co-form any polyhedron. Thus, three kinds of units are sufficient to build a neural architecture that can classify most classification problems. Typically, unit is also called block in the NAS community. These three types of blocks are shown in Figure~\ref{fig:blocksType}. The neural architecture with an A/B-type block can classify unbounded polyhedra, while the neural architecture with a C-type block is suitable for bounded polyhedra. Note that when such neural architecture is used to deal with specific classification problems, one needs to consider which type of blocks to use and what their number is, depending on the polyhedra of the problem.

\subsection{ENAS Algorithm} \label{sec:enas_fitness}

ENAS aims to search for a promising neural architecture from a given search space. We abbreviate the number of blocks of types A, B, and C to $n_A$, $n_B$ and $n_C$, where $n_A,n_B,n_C\in\mathbb{Z}$. \new{A solution can be represented as $(n_A,n_B,n_C)$.} The search space can be represented as \new{$\mathbb{Z}\times \mathbb{Z}\times \mathbb{Z}$}. In this study, we consider the (1+1)-ENAS algorithm with mutation (i.e., based on (1+1)-EA), whose main steps are: 
\begin{enumerate}[1)]
	\item \textbf{Individual Initialization}: Randomly initialize the number of blocks in the initial solution $x_0$ according to the uniform distribution $U[0,s]$, where $s\in \mathbb{N}^+$ is a predetermined initial upper bound for each type of block.
	\item \textbf{Offspring Generation}: Generate $y_t$ by executing the mutation operation on parent $x_t$.
	\item \textbf{Environmental Selection}: Select the solution with higher fitness value in $x_t$ and $y_t$ to enter into the next iteration. \new{Note that, the offspring is preferred to the parent in case of ties by the selection operator.}
\end{enumerate}

For each mutation operation in offspring generation, it applies the \emph{Addition, Removal} or \emph{Modification} operator (used in AE-CNN~\cite{sun2019completely}) uniformly at random, and repeats this process $K$ times independently. Specifically, it updates the individual by randomly adding a block of type A/B/C, removing a block of type A/B/C, or modifying a block to one of the other types. Note that for the empty blocks, the removal operator will keep it unchanged. If the (1+1)-ENAS algorithm adopts the \new{local} mutation, $K$ will be set as 1. Otherwise, if the (1+1)-ENAS algorithm adopts the \new{global} mutation, it will mutate $K\geq 1$ times in an iteration, where $K$ is determined by the Poisson distribution with $\lambda=1$, i.e., $K-1\sim Pois(1)$. Such mutation behavior is inspired by the mutation operator in \new{fixed parameter evolutionary algorithm~\cite{kratsch2010fixed},} genetic programming~\cite{durrett2011computational,qian2015variable}, and evolutionary sequence selection~\cite{qian2018sequence,qian2023multi}.

\section{Constructing Analyzable Problem} \label{sec:UNIFORM_problem}


In the machine learning community, classification problems are a crucial topic. Among them, the classic binary classification problems are commonly employed for comparing the performance of different DNNs. By establishing a mathematical relationship between the configuration of DNNs and classification accuracy on a binary classification problem, an explicit fitness function for evaluating DNNs on this problem can be formulated. Recently, Fischer \emph{et al.}~\shortcite{fischer2023first} formulated the explicit fitness function between parameters (i.e., weights and biases) of the neuron and classification accuracy on a binary classification problem with labels in $\{0,1\}$ on a unit hypersphere, and used it to analyze the runtime of (1+1)-Neuroevolution algorithm. However, the binary classification problem they defined restricts its solution scope to a fixed two-layer neural network, making it unsuitable for studying ENAS algorithms. Nevertheless, they motivated us to update their binary classification problem while formulating an explicit fitness function between neural architecture and classification accuracy. Furthermore, we can start analyzing the defined problem by partitioning the solution space based on fitness, which will be used in the runtime analysis.

\subsection{Problem Definition} \label{sec:problem_definition}

We consider points on $S:=\{x\in \mathbb{R}^D \vert \|x\|_2 \leq 1 \}$ as inputs to the neural network, and define a binary classification problem \textsc{Uniform} with labels in $\{0, 1\}$ \new{(i.e., negative class and positive class)} on these points. Although the interpretation of neural network classifying points as 0 or 1 does not depend on $D$, as preliminary work, we will study the case $D=2$ for simplicity. \new{Let evenly divide the unit circle $S$ into $n$ sectors with angle $2\pi/n$. We denote the $k$-th sector as $S_{\mathrm{sec}}^k$. Thus, $S=\cup_{k=1}^{n}S_{\mathrm{sec}}^k$. Furthermore, let each sector $S_{\mathrm{sec}}^k$ be divided into a triangle $S_{\mathrm{tri}}^k$ with area $\frac{1}{2}\sin(\frac{2\pi}{n})$ and a segment region (a segment is cut from a circle by a ``chord'') $S_{\mathrm{sec}}^k\setminus S_{\mathrm{tri}}^k$. Then, we define the \textsc{Uniform} problem by assigning the positive and negative class points as shown in Definition~\ref{def:uniform_problem}.}
\begin{definition} [\textsc{Uniform}] \label{def:uniform_problem}
    {
    \color{myColor}
    The problem assigns positive labels to points within
        $ S^+=
        (\cup_{k \in [1..{n}/{4}]}$ $S_{\mathrm{tri}}^{4k-3}) 
        \cup( \cup_{k \in [1..{n}/{2}]} S_{\mathrm{sec}}^{2k} \setminus S_{\mathrm{tri}}^{2k}) 
        \cup( \cup_{k \in [1..{n}/{4}]} S_{\mathrm{sec}}^{4k-1})
        $
    ,
    where $n/4\in \mathbb{Z}, n\geq 6$, while labeling points in $S\setminus S^+$ as negative class.
    }
\end{definition}

\begin{figure} [h]
	\centering
	\subfigure[]{
		\includegraphics[width=0.35\linewidth]{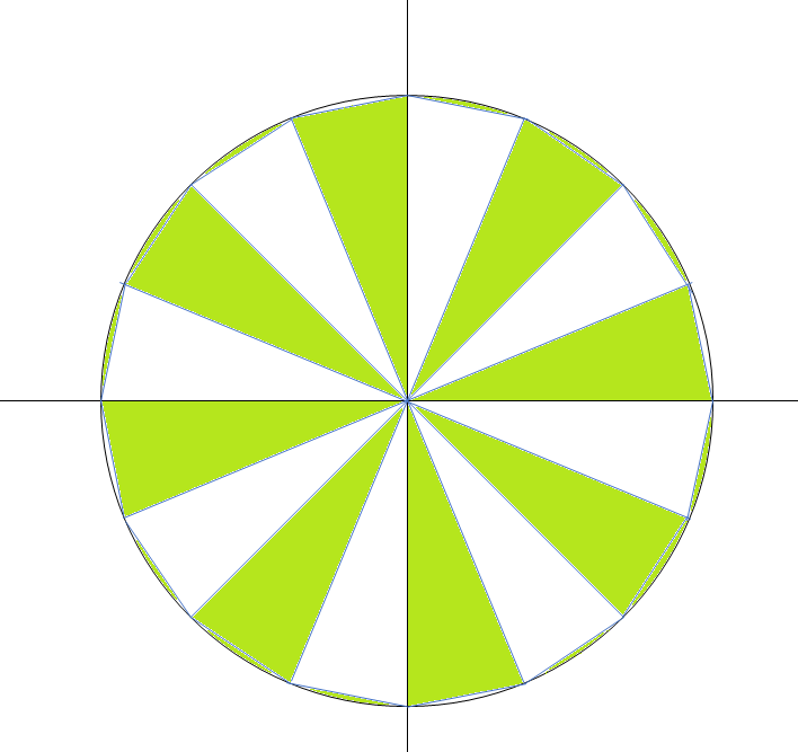}
		\label{fig:problem_UNIFORM}
	} 
	\subfigure[]{
		\includegraphics[width=0.35\linewidth]{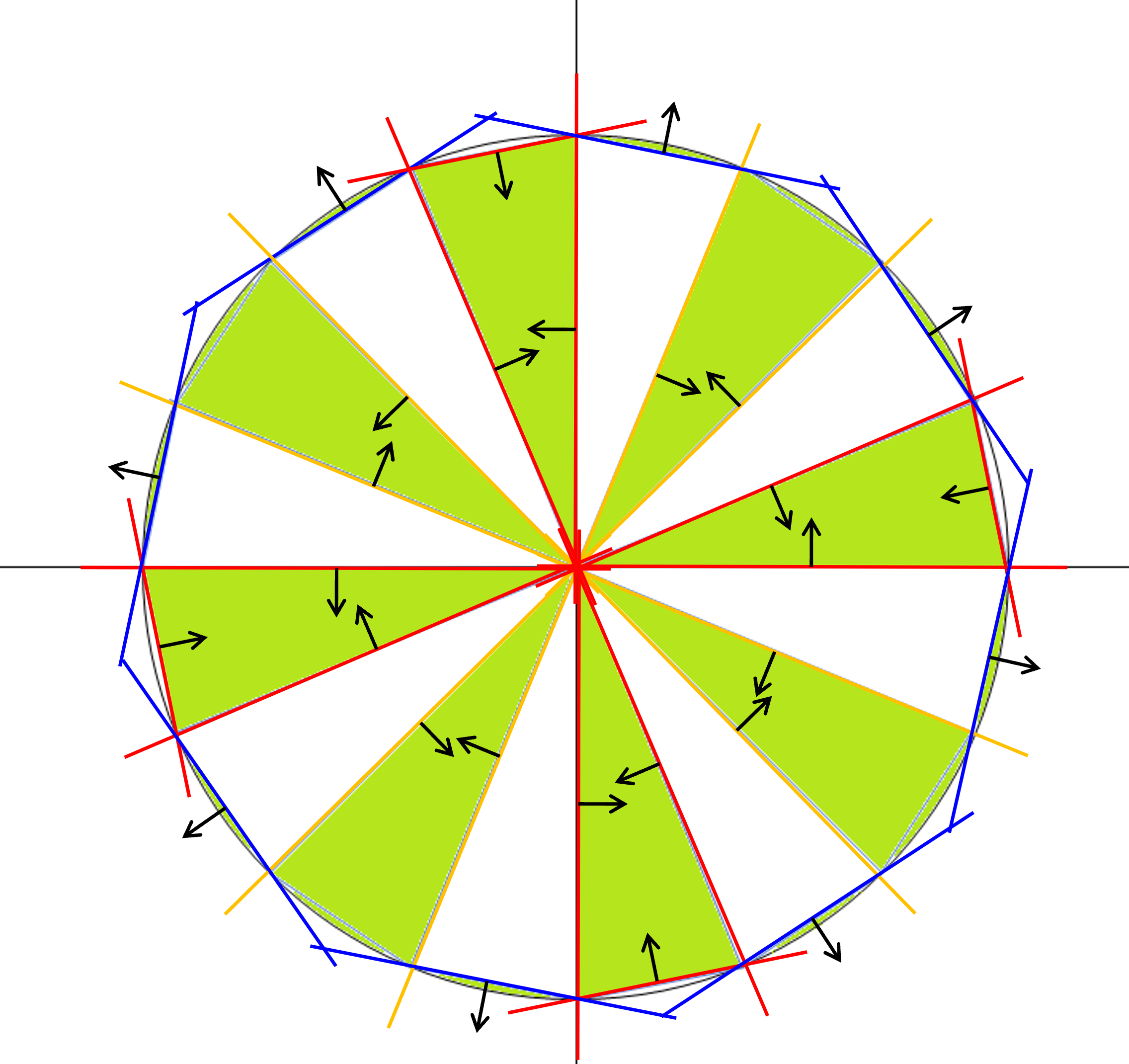}
		\label{fig:optimum}
	}
	\setlength{\abovecaptionskip}{-0.2em} 
	\caption{(a) Illustration of \textsc{Uniform} ($n=16$) when $D=2$. The green regions constitute the positive  points classified as 1, while the white regions constitute the negative points classified as 0. (b) One optimal solution with a fitness of 1. The blocks at optimal positions (with the optimal weights) are colored as follows: blue for A-type blocks, yellow for B-type blocks, and red for C-type blocks. The arrow points to the positively classified half-space.}
\end{figure}

The sketch of \textsc{Uniform} ($n=16$) is shown in Figure~\ref{fig:problem_UNIFORM}. Visually, the positive  points fall into three cases of regions: 1) the green (positive) segment region connected to the white (negative) triangle, which can be seen as a half-space; 2) the green sector region, which can be seen as an unbounded polyhedron; 3) the green triangle  connected to the white segment region, which can be seen as a bounded polyhedron. Blocks of types A, B, and C can correctly classify these three cases respectively. We use $a, b, c$ to represent the number of these three regions respectively. Then, we have $a=n/2, b=c=n/4$ for \textsc{Uniform}. 

As discussed in Section~\ref{sec:DNN}, there is an optimal configuration for blocks in neural architecture: $n_A, n_B,$ and $n_C$ are equal to $a, b,$ and $c$ respectively. Refer to Figure \ref{fig:optimum} for a visual representation of an optimal solution. Note that the optimal solution is not unique because the combination of the A-type block and C-type block (i.e., ``A+C"-type blocks) leads to the same fitness improvement as the B-type block. In practice, the optimal solution simultaneously satisfies the following threshold conditions: $n_A\geq a+\max\{0,(b-n_B)\}, n_B+n_C\geq b+c, n_C\geq c$. If $\max\{0,(b-n_B)\}=(b-n_B)>0$, it means that the optimal solution only has $n_B<b$ B-type blocks and uses $(b-n_B)$ ``A+C"-type blocks (i.e., $(b-n_B)$ A-type blocks and $(b-n_B)$ C-type blocks) to make up for it. The threshold conditions will serve as guidelines for the theoretical analysis presented below.

\subsection{Fitness Function Explicitization} \label{sec3:fitness_ij}

The fitness function $f$ used to evaluate \new{the classification accuracy of} neural architectures can be interpreted as the fraction of correctly classified points on the unit \new{circle} $S$, which is similar to the definition of $f$ in~\cite{fischer2023first}. Then, we have
\begin{equation}
	\label{eq:fitness_function}
	f = \frac{{\text{{vol}} 
			\left(
			(C \cap L) \cup (\overline{C} \cap \overline{L})
			\right)}}
	{{\text{{vol}}(S)}},
\end{equation}
where $C$ is the union of polyhedrons pointed by blocks, $\overline{C}=\mathbb{R}^2 \backslash C$, $L\subset S$ represents the set of points classified as 1, $\overline{L}=\mathbb{R}^2 \backslash L$, and $\text{{vol}}(\cdot)$ denotes the hypervolume. The union of $(C \cap L)$ and $(\overline{C} \cap \overline{L})$ represents the hypervolume of instances correctly classified as 1 or 0. 

The explicit expression of~\myref{eq:fitness_function} relies on the parameters of each neuron. \new{Note that each neuron with binary threshold activation can be seen as a hyperplane (line). Thus, the parameters that needed to be tuned are the angle $\varphi$ (weight) of the unit normal vector of each hyperplane and distance (bias) of each hyperplane from the origin~\cite{fischer2023first}.} However, ENAS primarily focuses on the evolution of neural architecture, leaving the optimization of parameters to alternative training approaches like gradient descent. The crucial consideration is theoretically guaranteeing the \new{optimal parameters}. For this reason, we impose a constraint on the training of parameters (shown in Definition~\ref{def:fixed_parameters_strategy}) to simplify the fitness evaluation progress in ENAS. \new{In fact, due to the finite training set, it will inevitably introduce noise (i.e., cannot guarantee the optimal parameter) when evaluating the performance of the architecture. However, considering a noisy fitness function is challenging, as it relies on an unknown finite training set and requires to analyze the distribution of the dataset. A noisy fitness function is subject for future research.}

\begin{definition} [Neurons Parameter Training Strategy]
	\label{def:fixed_parameters_strategy}
    The bias of the neuron in the A-type block is set as $\cos(\pi/n)$; the bias of the neurons in the B-type block is set as 0, and \new{the difference in $\varphi$ between these neurons is} $(2\pi/n)$; two neurons in C-type blocks have a bias of 0, with \new{$\varphi$} difference of $(2\pi/n)$, while the third neuron has a bias of $\cos(\pi/n)$ and a \new{$\varphi$} difference with the other two neuron of $(\pi/n)$. \new{Then, a training approach such as gradient descent is used to theoretically guarantee the optimal parameters.}
\end{definition}

Furthermore, based on \myref{eq:fitness_function} and Definition~\ref{def:fixed_parameters_strategy}, we give Lemma~\ref{lemma:fitness} to portray how the basic units of neural architecture (i.e., blocks) influence the fitness value. The formulation \myref{eq:fitness} is an explicit fitness function that can directly calculate the classification accuracy for any given architecture.


\begin{lemma}
	\label{lemma:fitness}
	Let $x=\{n_A,n_B,n_C\}$ be a solution on~\textsc{Uniform}. Let $i=\min\{{n_B+n_C,b+c}\}$ ($i\in \{0,\dots,b+c\}$) be the number of B/C-type blocks that can classify the {green triangles}, and $j=\min\{n_A, {a+\max\{0,b-n_B\}}\} + \min\{n_B,b\} - \max\{0,\min\{(n_B-b),(c-n_C)\}\}$ ($j\in\{0,\dots,a+b\}$) be the number of A/B-type blocks that classify {the green segment regions} minus the number of B-type blocks that classify the {white segment regions connected to green triangle}. Using the training strategy in Definition~\ref{def:fixed_parameters_strategy} for the architecture training, the fitness value of $x$ can be explicitly represented by
	\begin{equation}
		\begin{aligned}
			%
			%
			 f(x) & = \frac{1}{\pi} \cdot 
			 (
			     (   \frac{n}{2}+i)\cdot \text{the area of a triangle } \\ 
			     & \qquad + (\frac{n}{4}+j)\cdot \text{the area of a segment region} 
			 )
			 .
		\end{aligned}
		\label{eq:fitness}
	\end{equation}
\end{lemma}

{
\color{myColor}
To further understand how a neural architecture works on \textsc{Uniform}, we take an example to illustrate the fitness calculation of a given neural architecture. We take the problem shown in Figure~\ref{fig:problem_UNIFORM} as an example, i.e., $n=16$. Given a set of blocks including an A-type, a B-type, and a C-type, i.e., $n_A=n_B=n_C=1$, based on Section~\ref{sec:DNN}, the produced neural architecture $A_\mathrm{NA}$ is shown on Figure~\ref{fig:A_NA}. The best position of the hyperplanes in $A_\mathrm{NA}$ is shown in Figure~\ref{fig:A_NN}. Because $i=n_B+n_C=2$ and $j=n_A+n_B=2$, the fitness of architecture $A_\mathrm{NA}$ is $f(A_\mathrm{NA})=((\frac{n}{2}+i)\cdot \text{the area of a triangle} + (\frac{n}{4}+j)\cdot \text{the area of a segment region}) \cdot \frac{1}{\pi} = \sin(\frac{\pi}{8}) \cdot \frac{2}{\pi} + \frac{3}{8} \approx 0.62$, where the areas of a triangle and a segment region can be calculated by $\frac{1}{2} \sin(\frac{2\pi}{n})$ and $(\frac{\pi}{n} - \frac{1}{2}\sin(\frac{2\pi}{n}))$, respectively. Let the neural network with optimal parameters be denoted as $A_\mathrm{NN}$ which is also called a classifier for \textsc{Uniform} when $n=16$. The classification accuracy of $A_\mathrm{NN}$ is same as $f(A_\mathrm{NA})$ based on~\myref{eq:fitness_function}. Below we show how this classifier works. Given an input (positive point) whose coordinates are (0.5,0.6), the OR neuron receives a binary string ``010'' (where ``1'' is calculated by B-type block) and finally outputs 1, indicating that the classifier $A_\mathrm{NA}$ can correctly classify this point.

\begin{figure} [h]
	\centering
	\subfigure[]{
		\includegraphics[width=0.48\linewidth]{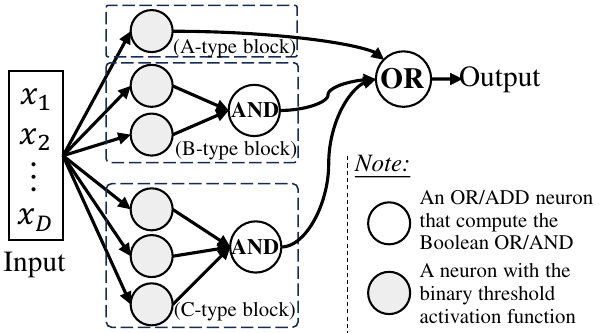}
        \label{fig:A_NA}
	} 
	\subfigure[]{
		\includegraphics[width=0.45\linewidth]{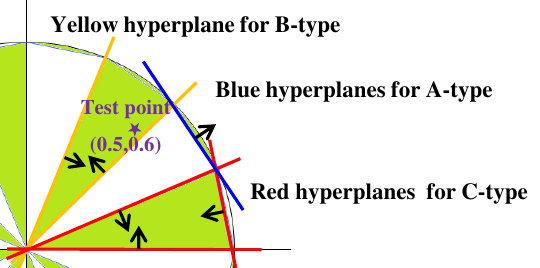}
        \label{fig:A_NN}
	} 
    \caption{(a) The neural architecture $A_\mathrm{NA}$ comprises an A-type, a B-type, and a C-type. (b) The best positions of the hyperplanes in $A_\mathrm{NA}$ for solving \textsc{Uniform} ($n=16$), i.e., the sketch of the classifier $A_\mathrm{NN}$. Given a point (0.5,0.6) as input, the classifier $A_\mathrm{NN}$ outputs 1.}
\end{figure}
}

Notably, using ENAS to solve the \textsc{UNIFORM} problem still primarily concentrates on the quantities of A, B, and C-type blocks within the neural network (i.e., $n_A$, $n_B$, and $n_C$), but simplifying the complexity of the problem from the perspective of fitness evaluation, making it resemble an optimization problem similar to the classical OneMax problem~\cite{droste2002analysis}. Unlike OneMax, searching for the optimum of \textsc{Uniform} is not just about increasing the number of specific bits, but also requires considering the potential negative impact on fitness when adding certain bits (details in Section~\ref{sec:subspace_analyse}). Moreover, the non-uniqueness of the problem in terms of optimal solutions is also a distinction and challenge.


\subsection{Solution Space Partition} \label{sec:subspace_analyse}
We partition the solution space based on fitness, which serves for the runtime analysis by using mathematical tools~\cite{zhou2019evolutionary}. In~\myref{eq:fitness}, $i\in\{0,1,...,\frac{n}{2}\}$ is the number of green triangles (including the triangles that are contained in the green sector regions) that have been correctly classified by $x$, and $j\in\{0,1,...,\frac{3n}{4}\}$ is the subtraction difference between the number of green segment regions correctly classified and the number of white segment regions misclassified. When a solution satisfies $n_B+n_C\geq b+c, n_B\leq b, n_A\geq a+b-n_B$, there are $i=\frac{n}{4}$ and $j=\frac{3n}{4}$, and this solution is an optimal solution with a fitness value of 1. Additionally, we find that when $n$ is greater than or equal to 6, the fitness improvement brought by maximizing the value of $j$ is significantly less than that by increasing the value of $i$ by 1, i.e., $\frac{3n}{4}\cdot (\frac{\pi}{n} - \frac{1}{2} \sin(\frac{2\pi}{n})) < \frac{1}{2} \sin(\frac{2\pi}{n})$. 

Thus, according to the number of B/C-type blocks contributing to the fitness, the entire solution space can be partitioned into $(b+c+1)$ subspaces that are denoted as $\cup_{i=0}^{b+c}\chi_i$. The following relationship exists between the subspaces:
\begin{equation*}
	\chi_0 < _f\chi_1 < _f\chi_2 < ... <_f\chi_{b+c},
\end{equation*}
where $\chi_i<_f\chi_{i+1}$ represents that $f(x)<f(y)$ for any $x\in \chi_i$ and $y\in \chi_{i+1}$. Note that the solutions in subspace $\chi_{b+c}$ may have more than $(b+c)$ B/C-type blocks, but the exceeding blocks do not affect fitness.

Furthermore, based on $j$ defined in Lemma~\ref{lemma:fitness}, we can partition each subspace $\chi_i$ into $(a+b+1)$ sub-subspaces, and each of the sub-subspace can be denoted as $\chi_i^j$, where $i\in\{0,1,...,b+c\}$ and $j \in \{0,1,..., a+b\}$. The following relationship exists between different sub-subspaces of $\chi_i$:
\begin{equation*}
	\chi_i^0 < _f\chi_i^1 < _f\chi_i^2 < ... <_f\chi_i^{a+b},
\end{equation*}
where $\chi_i^j<_f\chi_i^{j+1}$ represents that $f(x)<f(y)$ for any $x\in \chi_i^j$ and $y\in \chi_i^{j+1}$.

Based on the partitioning of the solution space described above, we find that the optimal solutions are in sub-subspace $\chi_{b+c}^{a+b}$. Next, we will discuss the expected runtime when the (1+1)-ENAS algorithm finds the optimum for the first time.

\section{Runtime Analysis} \label{sec:runtime}

In this section, we analyze the expected runtime of the (1+1)-ENAS algorithm when using \new{the local} mutation and \new{global} mutation for solving the \textsc{Uniform} problem. As the (1+1)-ENAS algorithm generates only one offspring solution in each iteration (generation), its expected runtime is just equal to the expected number $T$ of generations to reach the optimum, which is denoted as $\mathbb{E}[T]$.

\subsection{\textcolor{myColor}{Local} Mutation}
We prove in Theorems~\ref{theorem:1+1_onebit_upper} and~\ref{theorem:1+1_onebit_lower} that the upper and lower bounds on the expected runtime of the (1+1)-ENAS algorithm with \new{local} mutation solving \textsc{Uniform} is $O(n)$ and $\Omega(\new{n})$, respectively.

\begin{theorem}
	\label{theorem:1+1_onebit_upper}
	The (1+1)-ENAS algorithm with \new{local} mutation and neurons parameter training strategy (defined in Definition~\ref{def:fixed_parameters_strategy}) needs $\mathbb{E}[T]\leq 63n/4 \in O(n)$ to find the optimum of the \textsc{Uniform} problem with $D=2$, where $n\geq 6$. 
\end{theorem}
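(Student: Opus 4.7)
The plan is to apply the fitness-level method on the partition $\{\chi_i^j\}$ developed in Section~\ref{sec:subspace_analyse}, combined with a two-phase analysis: first reach the top subspace $\chi_{b+c}$ (i.e., maximize $i$), and then climb to the optimum $\chi_{b+c}^{a+b}$ inside it (i.e., maximize $j$). Because (1+1)-ENAS only accepts offspring whose fitness is no worse than that of the parent, the trajectory is monotone in fitness and hence in the total order on $\{\chi_i^j\}$ established in Section~\ref{sec:subspace_analyse}; this is exactly what the fitness-level method needs.

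For Phase 1, I would argue that at any solution lying in $\chi_i$ with $i<b+c$, at least one of $n_B<b$ or $n_C<c$ must hold, and that the one-bit mutation event ``apply Addition (probability $1/3$), then pick the relevant block type B or C (probability $1/3$)'' strictly increases $i$ by $1$. Using the inequality $\frac{3n}{4}(\pi/n - \frac{1}{2}\sin(2\pi/n)) < \frac{1}{2}\sin(2\pi/n)$ highlighted in Section~\ref{sec:subspace_analyse}, which holds precisely because $n\geq 6$, a single unit increase in $i$ dominates any possible change in $j$, so the offspring genuinely lies in a strictly higher sub-subspace of the partition. Therefore the transition probability satisfies $p_i\geq 1/9$ for every $i<b+c$, and summing $1/p_i$ over the at most $b+c=n/2$ such levels gives an expected Phase 1 time of at most $9(b+c)$.

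For Phase 2, once we are in $\chi_{b+c}$ we have $n_B\geq b$ and $n_C\geq c$, so the $\max\{\cdot\}$ term in the definition of $j$ from Lemma~\ref{lemma:fitness} vanishes and $j=\min\{n_A,a\}+b$. Reaching the optimum then amounts to pushing $n_A$ up to $a$, and one-bit mutation adds an A-block with probability $1/9$ whenever $n_A<a$, giving again $p\geq 1/9$. Summing across the at most $a+b=3n/4$ sub-subspaces $\chi_{b+c}^j$ with $j<a+b$ bounds Phase 2 by $9(a+b)$. Adding the two contributions yields an $O(n)$ upper bound on $\mathbb{E}[T]$; tightening the constant from the easy $9(b+c)+9(a+b)=45n/4$ to the stated $63n/4$ would require slightly sharper accounting for the interaction of Modification and Removal with boundary states where $n_B\geq b$ or $n_C\geq c$, or for the initial distribution of $n_A,n_B,n_C$ when the trajectory first enters $\chi_{b+c}$.

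The main obstacle is the Phase 1 argument: one must genuinely verify that a helpful Addition of a B- or C-block does not simultaneously degrade $j$ enough to keep the fitness unchanged or decreased, which is where the inequality above and the piecewise structure of the formula in Lemma~\ref{lemma:fitness} are crucial. Phase 2, by contrast, reduces to a standard (1+1)-EA-on-\textsc{OneMax}-style computation once the structural collapse $j=\min\{n_A,a\}+b$ inside $\chi_{b+c}$ is noted; no drift-theorem machinery beyond the fitness-level sum is needed.
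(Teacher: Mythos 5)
Your proposal is correct and follows essentially the same route as the paper: a two-phase fitness-level argument over the partition $\cup_i \chi_i$ and then $\cup_j \chi_{b+c}^j$. Two small points of comparison are worth recording. First, in Phase~1 the paper credits a success probability of $s_i=2/9$ by counting the addition of \emph{either} a B- or a C-type block, whereas you only count the one ``relevant'' type and use $1/9$; your accounting is the safer one (when, say, $n_B\geq b$ but $n_C<c$, adding a B-block does not increase $i=\min\{n_B,b\}+\min\{n_C,c\}$), and it only costs you a factor of two. Second, your Phase~2 is a genuine streamlining: the paper pessimistically restarts in $\chi_{b+c}^0$ and runs a three-case analysis over the signs of $n_B-b$ and $c-n_C$, while you observe that membership in $\chi_{b+c}$ already forces $n_B\geq b$ and $n_C\geq c$, so the $\max\{\cdot\}$ term vanishes, $j=\min\{n_A,a\}+b$, and only A-blocks remain to be added at probability $1/9$ per level --- a clean \textsc{OneMax}-style sum with no case distinction. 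Finally, your closing worry about ``tightening'' the constant is inverted: your bound $9(b+c)+9(a+b)=45n/4$ is \emph{smaller} than the stated $63n/4$, so it already implies the theorem as written; no sharper accounting is needed.
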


The main idea of the proof is based on the fitness-level technique~\cite{wegener2003methods,sudholt2013new}. By analyzing the expected runtime required to jump out of each level and summing them up, we can determine the overall runtime. Based on the jumps that can be achieved by \new{local} mutation, the sub-subspaces can be categorized into three scenarios:
\begin{itemize} 
	\item For a solution in $\chi_i^{j}$, where $i<{b+c}$ and $j<a+b$, it can jump to one of the sub-subspaces $\{\chi_i^{j+1}, \chi_{i+1}^{j-2}, \chi_{i+1}^{j-1}, \chi_{i+1}^{j}, \chi_{i+1}^{j+1}\}$ by mutation.
	\vspace{-0.5em}
	\item For a solution in $\chi_i^{j}$, where $i<{b+c}$ and $j=a+b$, it can jump to sub-subspace $\chi_{i+1}^{j-1}$ or $\chi_{i+1}^{j}$ by mutation.
	\vspace{-0.5em}
	\item For a non-optimal solution in $\chi_{a+b}^j$, it can only jump to sub-subspace $\chi_{a+b}^{j+1}$ by mutation.
\end{itemize}
We proceed to establish the proof for Theorem~\ref{theorem:1+1_onebit_upper}. 
\begin{proof}[Proof of Theorem~\ref{theorem:1+1_onebit_upper}]
	The optimization process for solving the \textsc{Uniform} problem can be divided into two phases: 1) searching for a solution with $(b+c)$ B/C-type blocks that can partition all of the green triangles, i.e., finding a solution with $i=b+c$; 2) searching for a solution that can correctly partition all green and white segment regions, i.e., finding a solution with $i=b+c$ and $j=a+b$. Let $\mathbb{E}[T_1]$ and $\mathbb{E}[T_2]$ represent the expected runtime for each phase, respectively, so that we have $\mathbb{E}[T]=\mathbb{E}[T_1]+\mathbb{E}[T_2]$.
	
	
	In the first optimization phase, each parent satisfies $n_B+n_C<b+c$. The solution space can be partitioned into $(b+c+1)$ subspaces, i.e., $\cup_{i=0}^{b+c}\chi_i$. We say that algorithm $\mathcal{A}$ is in level $i$ if the best individual created so far is in $\chi_i$. Let $s_i$ be the lower bound on the probability of $\mathcal{A}$ creating a new solution in $\cup_{i'=i+1}^{b+c}\chi_{i'}$, provided $\mathcal{A}$ is in $\chi_i$. Since $\mathcal{A}$ uses the \new{local} mutation operator, the most optimistic assumption is that $\mathcal{A}$ makes it rise one level (i.e., $\chi_{i+1}$). A successful jump from $\chi_{i}$ to $\chi_{i+1}$ can occur in two ways: 1) when $n_A>0$, modifying an A-type block into a B/C-type with the probability of $(1/3)\cdot(1/3)=1/9$, where the first $1/3$ is the probability of selecting Modification operator, and the second $1/3$ is the probability of choosing to change A-type block; 2) adding a B/C-type block with probability $(1/3)\cdot (2/3)=2/9$, where $1/3$ is the probability of selecting Addition operator, and $2/3$ is the probability of adding a B/C-type block. Thus, it can be derived that $s_i=2/9$. Consequently, the expected runtime for the first phase is
	\vspace{-0.5em}
	\begin{equation*}
		\label{eq:addBC}
		\mathbb{E}[T_1]
			\resizebox{!}{!}{$\leq\sum\nolimits_{i=0}^{b+c-1} {(s_i)^{-1}}
			= (\frac{2}{9})^{-1} \cdot (b+c) 
			= \frac{9n}{4} \in O(n).$}
	\end{equation*}
	
	After the aforementioned phase, we pessimistically assume that $\mathcal{A}$ belongs to $\chi_{b+c}^0$. The potential solutions in the subsequent optimization process can be further divided into $(a+b+1)$ sub-subspaces, i.e., $\cup_{j=0}^{a+b}\chi_{b+c}^j$. The realization of this optimization phase relies on algorithm $\mathcal{A}$ increasing the value of $j$ through mutation operations. To increase $j$, $\mathcal{A}$ should decrease the number of B-type blocks that have incorrectly classified the white segment regions (i.e., decreasing $\max\{0,\min\{(n_B-b),(c-n_C)\}\}$) or increase the number of A/B-type blocks that can correctly classify the green segment regions (i.e., increasing $\min\{n_A, \new{a+\max\{0,b-n_B\}}\} + \min\{n_B,b\}$). Thus, we consider three cases:
	
	(1) If $n_B>b$ and $n_C<c$, there are $\min\{n_B-b,c-n_C\}>0$ white segment regions that are misclassified. Modifying a B-type block into C-type enables accurate classification of the white segment region, which happens with probability $(1/3)\cdot (1/3)\cdot (1/2)=1/18$, where the first $(1/3)$ is the probability of selecting Modification operator, and $(1/3)\cdot(1/2)$ is the probability of selecting B-tyep block and modifying it into C-type block. In addition, it also can add a C-type block with probability $1/9$ to classify the white segment region correctly. Thus, the probability of the fitness improvement in this case is at least $(1/18)+(1/9)=1/6$.
	

	(2) If $n_B\leq b$, there are $a+b-(n_A+n_B)$ green segment regions that are not classified correctly. For this case, the algorithm can increase the classified green segment regions in four ways: 1) by adding an A-type block with probability $1/9$; 2) by adding an B-type block when $n_B<b$, whose probability is $1/9$; 3) by modifying a C-type block into a B-type when $n_C>0$ with probability $1/18$; 4) by modifying a C-type block into A-type when $n_B+n_C>b+c$ with probability $1/18$. However, the latter three ways require the solution to satisfy additional conditions. Thus, the probability of the fitness improvement in this case is at least $1/9$.

	(3) If $n_B>b$ and $n_C\geq c$, there are $a-n_A$ green segment regions that are not classified correctly. For this case, the algorithm can increase the classified green segment regions in three ways: 1) by adding an A-type block with probability $1/9$; 2) by modifying a B-type block into A-type with probability $1/18$; 3) by modifying a C-type block into A-type when $n_C>c$ with probability $1/18$. Thus, the probability of the fitness improvement in this case is at least $(1/9)+(1/18)=1/6$.

	Based on the above analysis, a jump upgrade between any two sub-subspaces (i.e., $\chi_{b+c}^j$ and $\chi_{b+c}^{j+1}$) occurs with probability at least ${1}/{9}$. Thus, the expected runtime required to transition from $\chi_{b+c}^{0}$ to $\chi_{b+c}^{a+b}$ is at most $9(a+b)={27n}/{4}\in O(n)$, i.e., $\mathbb{E}[T_2] \in O(n)$.

	By combining the two phases, we get $\mathbb{E}[T] \in O(n)$.
\end{proof}


The proof of the lower bound on the expected runtime, as shown in Theorem~\ref{theorem:1+1_onebit_lower}, relies on the additive drift analysis tool~\cite{he2001drift}. First, a distance function has to be constructed to measure the current distance to the optimum. We define two distance functions for the two optimization phases. Let $d_1(x)=(b+c)-i$ and $d_2(x)=(a+b)-j$ represent the difference between a solution and the optimum in terms of $i$ and $j$ (defined in Lemma~\ref{lemma:fitness}), where $i$ is the number of B/C-type blocks contributing to the fitness in solution $x$, and $j$ is the number of A/B-type blocks contributing to the fitness minus the number of B-type blocks that negatively affect the fitness. Then, we need to analyze the improvement in the distance to target state space (i.e., $\chi_{b+c}$ and $\chi_{b+c}^{a+b}$ for each phase) by one step. Finally, the upper/lower bound on the expected runtime can be derived through dividing the initial distance by a lower/upper bound on the improvement, where the calculation of initial distance relies on Lemma~\ref{lemma:EZ0}, i.e., the expected number of B/C-type blocks of the initial solution.

\begin{theorem}
	\label{theorem:1+1_onebit_lower}
	When the upper bound $s$ on the number of each type of block in the initial solution is $n/4$, the (1+1)-ENAS algorithm with \new{local} mutation and neurons parameter training strategy (defined in Definition~\ref{def:fixed_parameters_strategy}) needs $\mathbb{E}[T]=\Omega(\new{n})$ to find the optimum of the \textsc{Uniform} problem with $D=2$, where $n\geq 6$. 
\end{theorem}

\begin{lemma}
    \label{lemma:EZ0}
    For the (1+1)-ENAS algorithm, let $Z_0=n_B[x_0]+n_C[x_0]$ denote the number of B/C-type blocks of the initial solution $x_0$, where $n_B[x_0] \sim U[0,s]$ and $n_C[x_0]\sim U[0,s]$. The expectation of $Z_0$ is $s$.
\end{lemma}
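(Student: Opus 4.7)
The plan is to prove this lemma via a direct application of linearity of expectation, since the claim reduces to computing the mean of two independent discrete uniform random variables and summing them.

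First I would recall the definition: from the individual initialization step of the (1+1)-ENAS algorithm, each block count is drawn independently and uniformly from $\{0, 1, \ldots, s\}$. In particular, $n_B[x_0] \sim U[0,s]$ means $\Pr(n_B[x_0] = k) = 1/(s+1)$ for each $k \in \{0, 1, \ldots, s\}$, and likewise for $n_C[x_0]$. A direct computation then gives
\begin{equation*}
    \mathbb{E}[n_B[x_0]] \;=\; \sum_{k=0}^{s} k \cdot \frac{1}{s+1} \;=\; \frac{1}{s+1} \cdot \frac{s(s+1)}{2} \;=\; \frac{s}{2},
\end{equation*}
and by the identical argument, $\mathbb{E}[n_C[x_0]] = s/2$.

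Next I would invoke linearity of expectation (no independence assumption needed) to conclude
\begin{equation*}
    \mathbb{E}[Z_0] \;=\; \mathbb{E}[n_B[x_0]] + \mathbb{E}[n_C[x_0]] \;=\; \frac{s}{2} + \frac{s}{2} \;=\; s,
\end{equation*}
which closes the proof.

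There is essentially no obstacle here; the statement is an elementary calculation that is stated as a separate lemma only because it will be plugged into the forthcoming lower-bound argument in Theorem~\ref{theorem:1+1_onebit_lower}. The only minor point worth flagging is ensuring consistency with the notation $U[0,s]$ used in the initialization step, namely interpreting it as the discrete uniform distribution on $\{0, 1, \ldots, s\}$ (consistent with block counts being non-negative integers bounded by $s$) rather than a continuous uniform on $[0,s]$; under either interpretation the mean is $s/2$, so the conclusion $\mathbb{E}[Z_0]=s$ is unaffected.
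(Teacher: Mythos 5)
Your proof is correct, and it takes a genuinely different (and simpler) route than the paper. You apply linearity of expectation directly: each of $n_B[x_0]$ and $n_C[x_0]$ is discrete uniform on $\{0,1,\ldots,s\}$ with mean $s/2$, so $\mathbb{E}[Z_0]=s$. The paper instead first derives the full probability mass function of the sum, $P(Z_0=z)=(\min\{z,2s-z\}+1)/(s+1)^2$ for $z\in\{0,\ldots,2s\}$ (the triangular distribution arising from convolving two discrete uniforms), and then evaluates $\sum_z z\cdot P(Z_0=z)$ by pairing terms symmetric about $z=s$. Your argument is shorter, avoids the combinatorial bookkeeping of the convolution, and does not even require independence of the two block counts; the paper's route has the incidental benefit of exhibiting the entire distribution of $Z_0$, which could support concentration or tail arguments, though the paper does not in fact use it for anything beyond the mean. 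Your remark about interpreting $U[0,s]$ as the discrete uniform on $\{0,1,\ldots,s\}$ is also the correct reading and matches the paper's usage (its mass function has denominator $(s+1)^2$).
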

\begin{proof}
    The probability distribution of $Z_0$ is $P(Z_0=z)= {(\min\{z,2s-z\}+1)}/{(s+1)^2}$,
    where $z\in[0,1,...,2s]$. Thus, the expectation of $Z_0$ is
	\vspace{-0.5em}
    \begin{equation}
        \label{eq:E_Z0-upper}
        \begin{aligned}
            \mathbb{E}[Z_0]
            & \resizebox{!}{!}{ $=\sum\nolimits_{z=0}^{2s} z\cdot P(Z_0=z) $}
			\\
            &  \resizebox{!}{!}{ $=\sum\nolimits_{z=0}^{s-1} \frac{(z+2s-z)(z+1)}{(s+1)^2}  +  \frac{s}{s+1} = s
            , $}
			\vspace{0.5em}
        \end{aligned}
    \end{equation}
	where the second equality holds because the probability distribution diagram of $Z_0$ is symmetrically distributed about the point $P(Z_0=s)=1/(s+1)$.
\end{proof}

\begin{proof}[Proof of Theorem~\ref{theorem:1+1_onebit_lower}]
{
\color{myColor}
    We begin by analyzing the first phase. By considering operations that decrease the distance value, we can get the upper bound of the expected one-step improvement $\Delta_1$ of the first phase, i.e., $\Delta_1 \leq (1/3)\cdot P(\text{add a B/C-type block}) +(1/3)\cdot P(\text{modify an A-type block into a B/C-type})= (1/3)\cdot(2/3)+(1/3)\cdot(1/3)\cdot 1=1/3$.
}
    
    Let $Z_0$ denote the number of B/C-type blocks of the initial solution $x_0$. According to Lemma~\ref{lemma:EZ0} and $s\leq n/4=(b+c)/2$, we can get that $\mathbb{E}[Z_0]= s \leq {(b+c)}/{2}$. Furthermore, let $Z'_0$ represent the number of missing B/C-type blocks in $x_0$, i.e., $Z'_0=(b+c)-Z_0\new{\geq (b+c)/2= n/4}$. \new{Thus, according to the additive drift analysis tool, the expected runtime for the first phase is at most $(n/4)/(1/3)\in\Omega(n)$.}

    Because the runtime $T_1$ of the first phase is a lower bound on the runtime $T$ of the whole process for finding an optimal solution belonging to $\chi_{b+c}^{a+b}$, we have $\mathbb{E}[T]= \Omega(\new{n})$.
 \end{proof}

\subsection{\textcolor{myColor}{Global} Mutation} \label{sec:runtime_qbit}
Next, we will show that by replacing \new{local} with \new{global} mutation, the (1+1)-ENAS algorithm \new{can use the same time} to find the optimal architecture. In particular, we prove in Theorems~\ref{theorem:1+1_qbit_upper} and~\ref{theorem:1+1_qbit_lower} that the expected runtime of the (1+1)-ENAS algorithm with \new{global} mutation is $O(n)$ and $\Omega(n)$ for solving the \textsc{Uniform} problem, respectively.


\begin{theorem}
	\label{theorem:1+1_qbit_upper}
	When the upper bound $s$ on the number of each type of block in the initial solution is $n/4$, the (1+1)-ENAS algorithm with \new{global} mutation and neurons parameter training strategy (defined in Definition~\ref{def:fixed_parameters_strategy}) needs $\mathbb{E}[T]= O(n)$ to find the optimum of the \textsc{Uniform} problem with $D=2$, where $n\geq 6$. 
\end{theorem}

\begin{theorem}
	\label{theorem:1+1_qbit_lower}
	When the upper bound $s$ on the number of each type of block in the initial solution is $n/4$, the (1+1)-ENAS algorithm with \new{global} mutation and neurons parameter training strategy (defined in Definition~\ref{def:fixed_parameters_strategy}) needs $\mathbb{E}[T] = \Omega(n)$ to find the optimum of the \textsc{Uniform} problem with $D=2$, where $n\geq 6$. 
\end{theorem}

    The proof of Theorems~\ref{theorem:1+1_qbit_upper} and~\ref{theorem:1+1_qbit_lower} is similar to that of Theorem~\ref{theorem:1+1_onebit_upper} and~\ref{theorem:1+1_onebit_lower}, i.e., dividing the optimization process into two phases. Each phase aims to find $x_{T_1}\in \chi_{b+c}$ and $x_{T_2}\in \chi_{b+c}^{a+b}$, respectively. Differently, the analysis of jumps is more sophisticated, because jumps across multiple layers (subspaces) can be achieved in each iteration when using the \new{global} mutation. 
\begin{proof}[Proof of Theorem~\ref{theorem:1+1_qbit_upper}] 
	We begin by analyzing the first phase. After one mutation, the distance $d_1$ may either increase, decrease, or remain unchanged. Let $D_{1}^{+}$, $D_{1}^{-}$, and $D_{1}^{N}$ denote these events, respectively. The probabilities of these events occurring are as follows: 
	(1) $2/9=P($add B/C-type block$) \leq P(D_{1}^{-})\leq P($add B/C-type block, or modify A-type block to B/C-type$)=1/3$, where the lower bound is held by the case that there is no A-type block; 
	(2) $1/6 \leq P(D_{1}^{N})= P($modify a B-type block into C-type, or modify a C-type into B-type, or add/remove an A-type block$)\leq 1/3$, where the lower bound is based on the fact that there is at least one B or C-type block in any generation but may not have an A-type block;
	(3) $P(D_{1}^{+})=1-P(D_{1}^{-})-P(D_{1}^{N})$, thus $P(D_{1}^{+})\geq 1/3$.
	Then, we can get the lower bounds on the probabilities $P(D_{1}^{-})$, $P(D_{1}^{+})$, and $P(D_{1}^{N})$, which are abbreviated as $l_{1}^{-}=2/9$, $l_{1}^{+}=1/3$, and $l_{1}^{N}=1/6$, respectively. 
	

    Now we analyze the expected one-step improvement $\Delta_1$ of the first phase. Let $\Delta(k)$ denote the expected one-step improvement, given that the number of mutation times is $k$ in an iteration. Then, we have $\Delta(k)\geq \sum_{u=1}^{k}\sum_{v=0}^{\min\{u-1,k-u\}} {k\choose u} {{k-u}\choose v}\cdot (l_{1}^{-})^u (l_{1}^{+})^v (l_{1}^{N})^{k-u-v} \cdot (u-v)$, where $u$, $v$, and $(k-u-v)$ are the number of mutation times in which the distance decreases by one, the distance increases by one, and the distance remains unchanged, respectively. Thus, we have
	\begin{equation}
		\label{eq:deta_1}
		\begin{aligned}
			\Delta_1 &  
			\resizebox{!}{!}{$
				= \sum_{k=1}^{+\infty} P(K=k)  \Delta (k) 
				\geq \sum_{k=1}^{3} P(K=k) \Delta (k)
			$}
			\\ &
			\resizebox{!}{!}{$
				\geq \frac{1}{e} (l_{1}^{-}) 
				+ \frac{1}{e}\sum\nolimits_{u=1}^{2}{2\choose u} (l_{1}^{-})^u (l_{1}^{N})^{2-u} u
			$}
			\\ & \resizebox{0.94\linewidth}{!}{$
				+ \frac{1}{2e}\sum\nolimits_{u=1}^{3}{3\choose u} (l_{1}^{-})^u  \sum\nolimits_{v=0}^{m} \binom{3-u}{v} (l_{1}^{+})^v (l_{1}^{N})^{3-u-v} (u-v)
			$}\\
			& \resizebox{!}{!}{$
				\geq \frac{1}{e}\cdot [\frac{2}{9}+\frac{14}{81}+\frac{73}{972}]
				>\frac{1}{6}
			$}
			,
		\end{aligned}
	\end{equation}
	where $P(K=1)={1}/{e}$, $P(K=2)={1}/{e}$, and $P(K=3)={1}/{(2e)}$ since $K-1\sim Pois(1)$, and $m=\min\{u-1,3-u\}$. 

	
	Owing to the expected number of B/C-type blocks for the initial solution $x_0$ is $s$ as derived in Lemma~\ref{lemma:EZ0} and $0\leq s\leq n/4$, the expected distance for $x_0$ is $b+c-s\leq b+c=n/2$. Consequently, according to the additive drift analysis tool, the expected runtime for the first phase is at most $(n/2)/(1/6) = 3n \in O(n)$, i.e., $\mathbb{E}[T_1] \in O(n)$.
	
		
	Then, we analyze the second phase. To make progress, it is necessary to maintain $n_B+n_C\geq b+c$ (i.e., $d_1$ unchanged). Thus, it is sufficient to consider the change in $d_2$ to analyze the lower bound of one-step progress. Let $D_{2}^{+}$, $D_{2}^{-}$, and $D_{2}^{N}$ denote the events where the distance from $d_2$ increases, decreases, and remains unchanged, respectively. The probabilities of these events occurring are bounded by: 
	(1) $P(D_{2}^{-})=P($add an A-type block, or modify B-type block into C-type, or add an C-type block$)\geq 1/9$, where the lower bound is held by the case of adding an A-type block when $n_B=b$; 
	(2) $P(D_{2}^{N})= P($add a redundant A/B/C-type block, or remove a useful A/B/C-type block$)\geq 1/9$, where the lower bound is held by the case of adding a C-type block when $n_B<b$ and $n_A+n_B<a+b$;
	(3) $P(D_{2}^{+})= P($wrong modification between A/B/C-type blocks, or remove a useful A/B-type block$)\geq 1/18$, where the lower bound is held by the case of modifying B-type into C-type block when $n_B<b$.
	
	Similar to the calculation of $\Delta_1$ in \myref{eq:deta_1}, we can get the expected one-step improvement of the second phase by
	\begin{equation*}
		\begin{aligned}
			\resizebox{!}{!}{
				$
				\Delta_2 
				\geq \sum\nolimits_{k=1}^{3} P(K=k) \Delta(k)
				\geq \frac{1}{e}\cdot [\frac{1}{9} + \frac{4}{81} + \frac{11}{972}]
				> \frac{1}{16}
				$
			}
			.
		\end{aligned}
	\end{equation*}
	In the most pessimistic scenario, there is $d_2(x_{T_1})=a+b$. Thus, the expected runtime for the second phase is at most ${(a+b)}/(1/16) = 12n \in O(n)$, i.e., $\mathbb{E}[T_2]\in O(n)$.
	Combining the two phases, we get $\mathbb{E}[T] \in O(n)$.  
\end{proof}

\begin{proof}[Proof of Theorem~\ref{theorem:1+1_qbit_lower}]
	Similar to the proof of Theorem~\ref{theorem:1+1_qbit_upper}, we utilize its definition of $k,u,v,P(D_{1}^{-}), P(D_{1}^{+})$, and $P(D_{1}^{N})$, and analyze the expected one-stpe improvement $\Delta(k)$. 

	Note that the upper bound of $P(D_{1}^{+})$ is $1/3$ which is held by the case of removing a B/C-type block or modifying B/C-type block into A-type. Thus, the upper bounds of $P(D_{1}^{-}), P(D_{1}^{+}),$ and $P(D_{1}^{N})$ are all less than or equal to ${1}/{3}$, implying that the one-step improvement of the first optimization phase is bounded by $\Delta(k)\leq \sum_{u=1}^{k} \sum_{v=0}^{\min\{u-1,k-u\}} {k\choose u} {{k-u}\choose v} (u-v) (1/3)^k \leq \sum_{u=1}^{k}{k\choose u} 2^{k-u} k/3^k$, where the last inequality holds by $u-v\leq k$. Then, we have the expected one-step improvement of the first phase by
	\begin{equation*}
		\label{eq:qbit_deta1_upper}
		\begin{aligned}
			\Delta_1 
			& \resizebox{!}{!}{$
			\leq \sum\nolimits_{k=1}^{+\infty} P(K=k) \cdot \sum\nolimits_{u=1}^{k}{k\choose u} 2^{k-u} \cdot (k/3^k)
			$}\\
			& \resizebox{!}{!}{$
			= \sum\nolimits_{k=1}^{+\infty} P(K=k) \cdot ( \left(1+2\right)^k-{k\choose 0}\cdot2^k) \cdot (k/3^k)
			$}\\
			& \resizebox{!}{!}{$
			= \sum\nolimits_{k=1}^{+\infty} P(K=k)\cdot k\cdot (1-\left(\frac{2}{3}\right)^k) 
			$}
			\\
			& \resizebox{!}{!}{$
				= \sum_{k=1}^{+\infty} k \cdot \frac{\lambda^{k-1}}{(k-1)!} e^{-\lambda} - \sum_{k=1}^{+\infty} k (\frac{2}{3})^k \cdot \frac{\lambda^{k-1}}{(k-1)!} e^{-\lambda}
			$}
			\\
			& \resizebox{!}{!}{$
				= 2- (\frac{2}{3}) e^{(\frac{2}{3}-1)\lambda} \sum_{k=1}^{+\infty} k\cdot \frac{(\frac{2}{3}\lambda)^{k-1}}{(k-1)!} e^{-\frac{2}{3}\lambda}
			$}
			\\ & \resizebox{!}{!}{$=2 - \frac{2}{3}e^{-1/3}\cdot(\frac{2}{3}+1) = 2 - \frac{10}{9}e^{-{1}/{3}}$}
			.
		\end{aligned}
	\end{equation*}
	
	Because the expected number of B/C-type blocks of the initial solution $x_0$ is $s$ as derived in Lemma~\ref{lemma:EZ0}, the expectation of $d_1(x_0)$ is $b+c-s \geq n/4$. Thus, the expected runtime of $T_1$ is bounded by 
	\begin{equation*}
		\begin{aligned}
			\mathbb{E}[T_1] \geq \frac{{n}/{4}}{2 - \frac{10}{9}e^{-{1}/{3}}} \geq \frac{1}{5} n \in \Omega(n),
		\end{aligned}
	\end{equation*}
	implying that the expected runtime of the whole process is $\mathbb{E}[T]= \Omega(n)$.
\end{proof}

\section{Experiments}


\begin{figure}
	\centerline{\includegraphics[width=1\linewidth]{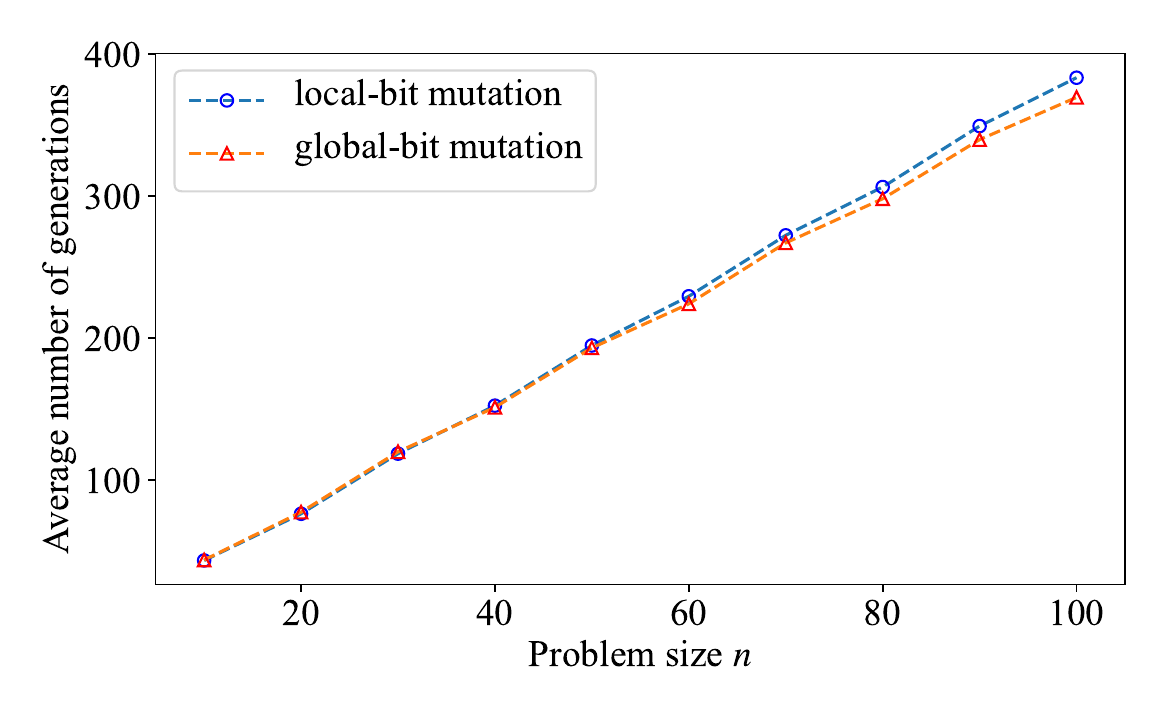}}
	\caption{Average number of generations of the (1+1)-ENAS algorithm using \textcolor{myColor}{local} and \textcolor{myColor}{global} mutations for solving \textsc{UNIFORM}.}
	\label{fig:exp}
\end{figure}

In this section, we investigate the empirical performance of the (1+1)-ENAS algorithm with \new{local} mutation and \new{global} mutation by experiments. We set the problem size $n$ from 10 to 100, with a step of 10, the upper bound $s=n/4$ for each type of block, as suggested in Theorems~\ref{theorem:1+1_onebit_upper} to \ref{theorem:1+1_qbit_lower}. Figure~\ref{fig:exp} shows the average number of generations (on running 10,000 runs) of the algorithms before reaching the optimal value of \textsc{UNIFORM}. We can observe that \new{local} mutation and \new{global} mutations achieve nearly the same performance, which verifies our theoretical analysis. 

\section{Conclusion} \label{sec:conclusion}


In this paper, we take the first step towards the mathematical runtime analysis of the ENAS algorithm. Specifically, we defined a problem \textsc{UNIFORM} based on a binary classification problem with labels in $\{0,1\}$ on a unit \new{circle} points. Furthermore, we formulated an explicit fitness function to represent the relationship between the neural architecture and classification accuracy. Then, we analyzed the expected runtime of the (1+1)-ENAS algorithms with \new{local} and \new{global} mutations, respectively. Our theoretical results show that \new{local} mutation and \new{global} mutations can achieve nearly the same performance, which is also verified by experiments. In this first study of the runtime of ENAS algorithms, we have only scratched the surface of the rich structure arising from a binary classification problem. Future works will conclude more advanced classification problems. In addition, more advanced evolutionary operators (e.g., crossover operators) and population mechanisms will be considered for future research.

\bibliographystyle{named}
\bibliography{ijcai24}

\clearpage
\section{Appendix}
{
\color{myColor}
This part provides the proof of Lemma~\ref{lemma:fitness}, which is omitted in Section~\ref{sec3:fitness_ij} of our original paper due to space limitation.

\begin{proof} [Proof of Lemma 1]

    The $\textsc{Uniform}$ problem evenly divides the unit circle into $n$ sectors and each sector can be divided into a triangle with area $s_{\mathrm{tri}}=\frac{1}{2}\sin(\frac{2\pi}{n})$ and a segment region with area $s_{\mathrm{seg}}=(\frac{\pi}{n}-\frac{1}{2}\sin(\frac{2\pi}{n}))$. Given a neural architecture $x$ with $n_A$ A-type blocks, $n_B$ B-type blocks, and $n_C$ C-type blocks, its fitness (classification accuracy) is determined by the area of correctly and incorrectly classified regions. We begin our proof by introducing the impact of parameter tuning on classification accuracy.
    
    According to Definition~\ref{def:fixed_parameters_strategy}, the parameter tuning is exclusively associated with the $\varphi$ of each hyperplane (line). Specifically, the parameters that need to be tuned are the $\varphi$ of the neuron in A-type block, the $\varphi$ of one neuron in B-type block, and the $\varphi$ of one neuron in C-type block. Let the boundary between the green and white region be called the green boundary. To achieve optimal classifying, each hyperplane of $x$ should overlap with a green boundary while its normal vector points to the green region. A special case is that when a C-type block is used to classify the triangle of the green sector region or when an A-type block is used to classify the segment region of the green sector region, the hyperplane of this block needs to overlap the edge of the triangle that is not connected to the origin.
    Note that a region will not be classified by multiple blocks unless $n_A> a+\max\{0,(b-n_B)\}$ or $n_B+n_C\geq b+c$, since the training strategy ensures optimal parameter configuration. Let $l_t$ and $l_s$ be the number of correctly classified triangles and segment regions, respectively. The classification accuracy equals $(l_t\cdot s_{\mathrm{tri}} + l_s\cdot s_{\mathrm{seg}})/{\pi}$. We explain $l_t$ and $l_s$ in detail below. 
    

    First, we analyze the number of correctly classified triangles, i.e., $l_t$, which is influenced by the number of B/C-type blocks. Under the optimal parameter guarantee, the white triangles will not be incorrectly classified by any block. Thus, the number of white triangles that are correctly classified is $n/2$. Furthermore, according to the different cases of $n_B$ and $n_C$, we analyze the number of green triangles that are correctly classified.

    (1) Case 1: When $n_B\geq b$ and $n_C\geq c$, there are $b$ B-type blocks for classifying the green sectors and $c$ C-type blocks for classifying the green triangles connected with white segment regions; thus, $b+c$ green triangles are correctly classified.
    
    (2) Case 2: When $n_B\geq b$ and $n_C<c$, there are $b$ B-type blocks for classifying the green sectors, while $n_C$ C-type blocks and $\min\{n_B-b,c-n_C\}$ B-type blocks for classifying the green triangles connected with white segment regions; thus, $b+ n_C+\min\{n_B-b,c-n_C\}=\min\{n_B+n_C,b+c\}$ green triangles are correctly classified.

    (3) Case 3: When $n_B< b$ and $n_C\geq c$, there are $c$ C-type blocks for classifying the green triangles connected with white segment regions, while $n_B$ B-type blocks and $\min\{b-n_B,n_C-c\}$ C-type blocks for classifying the green sectors; thus, $c+n_B+\min\{b-n_B,n_C-c\}=\min\{c+b,n_B+n_C\}$ green triangles are correctly classified.
    
    (4) Case 4: When $n_B< b$ and $n_C<c$, there are $n_B$ B-type blocks for classifying the green sectors and $n_C$ C-type blocks for classifying the green triangles connected with white segment regions; thus, $n_B+n_C$ green triangles are correctly classified.

    According to the above four cases, the number of green triangles that are correctly classified is $\min\{b+c,n_B+n_C\}=i$. Thus, $l_t= n/2+i$.

    Second, we analyze the number of correctly classified segment regions, i.e., $l_s$, which is affected by the number of A/B-type blocks. Since B-type blocks may incorrectly classify white segment regions when $n_B>b$ and $n_C<c$, we consider two cases based on $n_B$. 

    
    (1) Case 1: When $n_B\leq b$, $n_B$ green segment regions that are connected to the green triangle are classified correctly by B-type blocks. Furthermore, there are $\min\{n_A,a+b-n_B\}$ green segment regions (including the green segment region in the green sector) that are correctly classified by A-type blocks. In addition, the number of white segment regions that are correctly classified is the number of white segment regions in \textsc{Uniform}, i.e., $n/4$. Thus, we can get $l_s=n_B+\min\{n_A,a+b-n_B\}+n/4$ for this case.
    
    (2) Case 2: When $n_B>b$, $b$ green segment regions that are connected to the green triangle  are classified correctly, while $\min\{(n_B-b),(c-n_C)\}$ white segment regions that are connected to the green triangle  are classified incorrectly when $n_C<c$. Thus, the number of white segment regions that are correctly classified is $ n/4 - \max\{0,\min\{(n_B-b),(c-n_C)\}\}$. Furthermore, $\min\{n_A,a\}$ green segment regions that are connected to the white triangle  will be classified correctly by A-type blocks. Thus, we can get $l_s=b+ n/4 - \max\{0,\min\{(n_B-b),(c-n_C)\} + \min\{n_A,a\}\}$ for this case.

    Based on Case 1 and Case 2, we can get $l_s =\min\{n_B,b\} + \min\{n_A,a+\max\{0, b-n_B\}\} + n/4 - \max\{0,\min\{(n_B-b),(c-n_C)\}\} = n/4 + j$, where $j$ is the number of A/B-type blocks that classify the green segment regions (i.e., $\min\{n_B,b\} + \min\{n_A,a+\max\{0, b-n_B\}\}$) minus the number of B-type blocks that classify the white segment regions connected to green triangles(i.e., $\max\{0,\min\{(n_B-b),(c-n_C)\}\}$). 
\end{proof}
}

\end{document}